\newtheorem{theorem}{Theorem}[section]
\theoremstyle{definition}
\newtheorem{definition}{Definition}[section]
\begin{document}

\title{Input Snapshots Fusion for Scalable Discrete-Time Dynamic Graph Neural Networks}

\author{QingGuo Qi}
\email{qiqingguo@zju.edu.cn}
\orcid{0009-0001-0590-0206}
\affiliation{%
  \institution{Zhejiang University} 
  \city{Hangzhou}
  \state{Zhejiang}
  \country{China}
}
\author{Hongyang Chen}
\orcid{0000-0002-7626-0162}
\authornotemark[1]
\email{dr.h.chen@ieee.org}
\affiliation{%
  \institution{Zhejiang Lab}
  \city{Hangzhou}
  \state{Zhejiang}
  \country{China}
}
\author{Minhao Cheng}
\orcid{0000-0003-3965-4215}
\email{mmc7149@psu.edu}
\affiliation{%
  \institution{Pennsylvania State University}
  \city{University Park}
  \state{PA}
  \country{USA}
}
\author{Han Liu}
\orcid{0000-0001-6921-2050}
\email{liu.han.dut@gmail.com}
\affiliation{%
  \institution{Dalian University of Technology}
  \city{Dalian}
  \state{Liaoning}
  \country{China}
}

\renewcommand{\shortauthors}{QingGuo Qi, Hongyang Chen, Minhao Cheng, and Han Liu}

\begin{abstract}

In recent years, there has been a surge in research on dynamic graph representation learning, primarily focusing on modeling the evolution of temporal-spatial patterns in real-world applications. However, within the domain of discrete-time dynamic graphs, the exploration of temporal edges remains underexplored. Existing approaches often rely on additional sequential models to capture dynamics, leading to high computational and memory costs, particularly for large-scale graphs. To address this limitation, we propose the Input {\bf S}napshots {\bf F}usion based {\bf Dy}namic {\bf G}raph Neural Network (SFDyG), which combines Hawkes processes with graph neural networks to capture temporal and structural patterns in dynamic graphs effectively. By fusing multiple snapshots into a single temporal graph, SFDyG decouples computational complexity from the number of snapshots, enabling efficient full-batch and mini-batch training. Experimental evaluations on eight diverse dynamic graph datasets for future link prediction tasks demonstrate that SFDyG consistently outperforms existing methods.
\end{abstract}

\begin{CCSXML}
<ccs2012>
   <concept>
       <concept_id>10003752.10003809.10003635.10010038</concept_id>
       <concept_desc>Theory of computation~Dynamic graph algorithms</concept_desc>
       <concept_significance>500</concept_significance>
       </concept>
   <concept>
       <concept_id>10002951.10003227.10003351</concept_id>
       <concept_desc>Information systems~Data mining</concept_desc>
       <concept_significance>500</concept_significance>
       </concept>
 </ccs2012>
\end{CCSXML}

\ccsdesc[500]{Theory of computation~Dynamic graph algorithms}
\ccsdesc[500]{Information systems~Data mining}
\keywords{Temporal Graph Embedding, Scalable Graph Neural Networks, Discrete-Time Dynamic Graph Representation, Hawkes Process}

\received{29 January 2024}
\received[revised]{10 August 2024}
\received[accepted]{17 November 2024}

\maketitle
\section{Introduction} 
Graphs, versatile data structures comprised of vertices and edges, play a pivotal role in various real-world applications such as social networks \cite{Wu_Lian_Xu_Wu_Chen_2020, Fan_Ma_Li_He_Zhao_Tang_Yin_2019}, molecule graphs \cite{reiser2022graph} and traffic networks \cite{yu2018spatio, li2021spatial}. 
The rise of deep learning has elevated graph neural networks (GNNs) as essential tools for modeling graphs, enabling the depiction of intricate relationships and interactions among vertices via low-dimensional embeddings \cite{kipf2017semi, gilmer2017neural, velikovi2017graph}.
Currently, research in scalable graph representation learning has made significant advancements that can be applied to large-scale graphs with billions of nodes \cite{hamilton2017inductive, pmlr-v97-wu19e, chiang2019cluster}.
However, existing research primarily focuses on static graphs defined by fixed nodes and edges, thereby posing a challenge when transitioning to dynamic graphs. 

Real-world graphs often undergo dynamic changes, with the graph structure evolving over time \cite{egcn20}. 
The basic changes may take place in both the quantity and attributes of nodes and edges in the temporal dimension.
In this work, to streamline our investigation and align with existing literature \cite{egcn20, sankar2020dysat, yang2021discrete, You22, zhang2023dyted, Zhu23, tgn_icml_grl2020, Zuo2018, congwe2023}, we take the assumption that nodes remain fixed and concentrate on the temporal edge events.
Based on the granularity of time step \cite{kazemi2020representation, Zheng2024}, dynamic graphs can be categorized into Continuous-time Dynamic Graphs (CTDGs) and Discrete-time Dynamic Graphs (DTDGs).
In this case, CTDGs aim to predict upcoming events at the subsequent timestamp (or within a short period captured in a mini-batch), so they represent the evolution of dynamic graphs as a stream of edge events \cite{trivedi2019dyrep, kumar2019predicting, tgat_iclr20, tgn_icml_grl2020, wang2021apan}.
On the other hand, the goal of DTDGs is to forecast upcoming events within a defined time frame (e.g., a day, a week, or a month). This involves representing the dynamic graph through a chronological sequence of snapshots, where each snapshot encompasses all events occurring during that time interval.
In this study, our primary focus lies on DTDGs for their ability to provide a comprehensive whole-graph perspective on dynamic graphs, which renders them more susceptible to scalability challenges.

The prevailing research methodology of DTDGs involves utilizing static GNN models \cite{hamilton2017inductive, velikovi2017graph} to represent each graph snapshot separately. Subsequently, a sequential encoder \cite{hochreiter1997long, chung2014empirical, vaswani2017attention} is employed to capture temporal dynamics for predicting all events in the subsequent snapshot \cite{sankar2020dysat, egcn20, yang2021discrete, You22, zhang2023dyted, Zhu23}.
For example, as shown in Figure~\ref{fig:dgnn_lp:a}, in order to forecast potential trades for the subsequent day using data from the previous week, cryptocurrency exchanges could utilize GNN on the daily snapshots to generate five embeddings for each user. These embeddings are then fed into a Transformer \cite{vaswani2017attention} to derive the user's final embedding for prediction.

\begin{figure}
  \centering
  \subfloat[a][Common Setting]{\includegraphics[width=0.95\columnwidth]{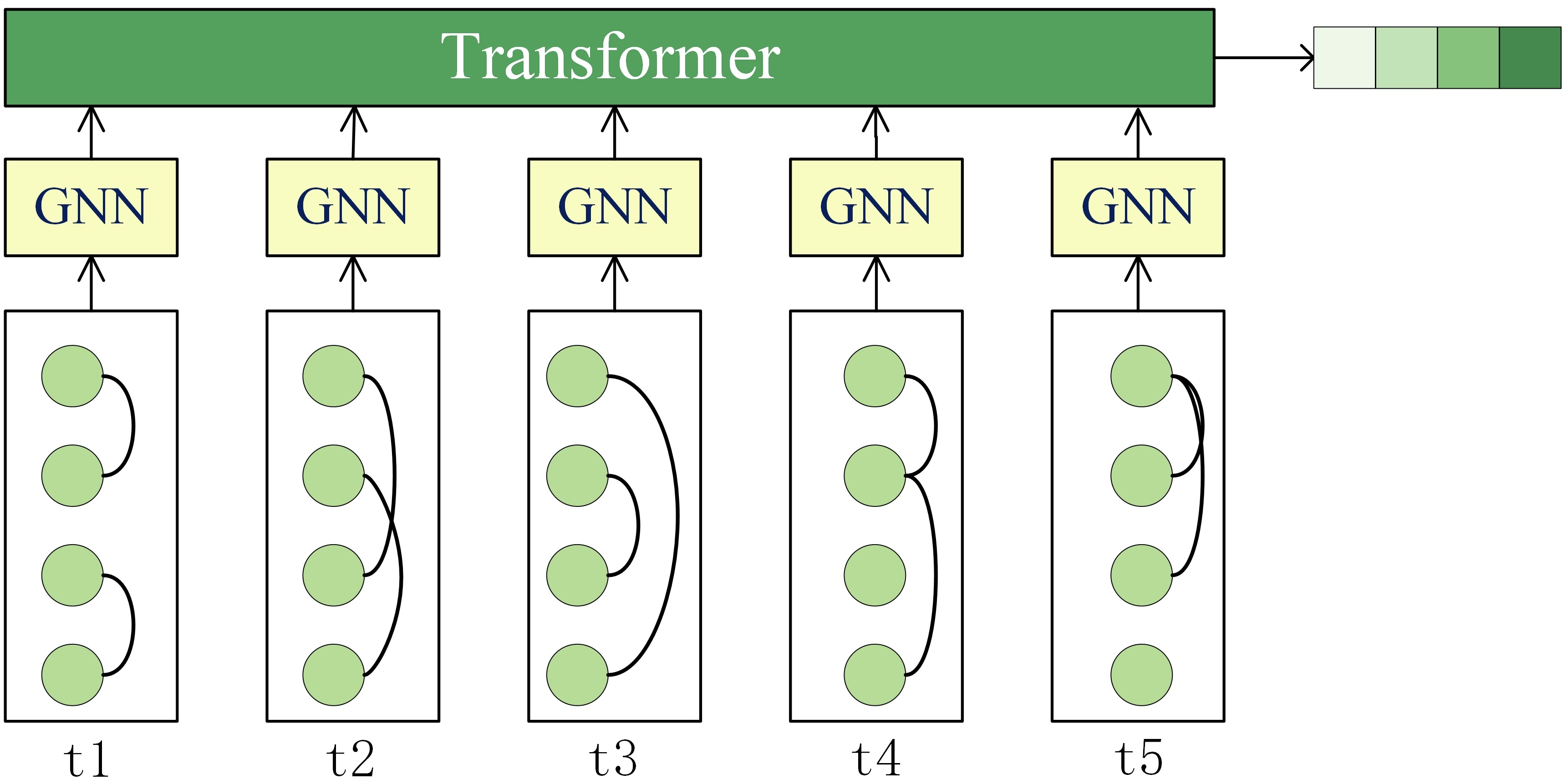} \label{fig:dgnn_lp:a}} \\
  \subfloat[b][Input Snapshots Fusion]{\includegraphics[width=0.95\columnwidth]{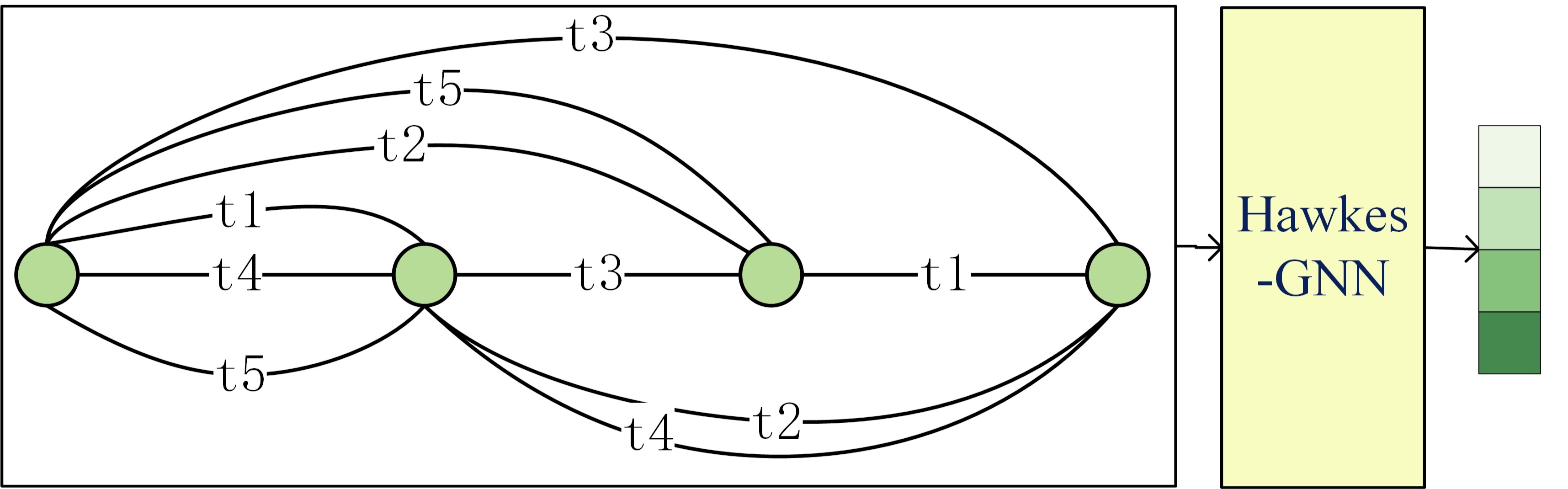} \label{fig:dgnn_lp:b}}
  \caption{Training setting for the task of link prediction in dynamic graphs.} \label{fig:dgnn_lp}
  \Description{The sliding window method has been employed in previous studies as well as in our research for predicting links in dynamic graphs.}
\end{figure}

Simple as it is, this combination represents a compromise since \textit{GNNs designed for static graphs are insufficient for handling multiple temporal edges}.
Furthermore, applying a static GNN to process each snapshot comes with notable limitations.
Precise timestamps are lost, resulting in equal treatment of all edges within a given timeframe, and multiple edges between two nodes are merged into one if they coexist.
Moreover, incorporating sequential encoders increases system complexity as the number of input snapshots grows, which limits the model's scalability on large datasets with long sequences, even when using mini-batch training methods such as GraphSAGE \cite{hamilton2017inductive} or ClusterGCN \cite{chiang2019cluster}.
Therefore, existing studies on the scalability of DGNNs, such as Roland \cite{You22} and Efficient-DGNN \cite{10.1145/3458817.3480858}, often prioritize approaches that involve dividing the sequence.
In this study, we take a reverse approach and propose a novel method termed "input snapshots fusion", as shown in Figure~\ref{fig:dgnn_lp:b}, which merges all input snapshots into a large temporal graph, enabling the presence of multiple temporal edges connecting two nodes.

In the temporal graph, every temporal edge is associated with a timestamp, denoting either a precise time point or a date, depending on the dataset.
To represent temporal edges, we utilize the theory of Hawkes processes to capture events within the adjacency matrix. 
Interestingly, by induction, modeling on the temporal graph can be approximately viewed as a graph denoising problem under the smoothing assumption with time decay.
In other words, the influence of an event relies on its age, with newer events carrying a stronger impact, as observed in applications like communication and recommendation systems.
This formulation leads to the development of a message passing mechanism with time decay, which leads to the integration of Hawkes processes with Graph Convolutional Networks \cite{kipf2017semi} (Hawkes-GCN) and Graph Attention Networks \cite{velikovi2017graph} (Hawkes-GAT).
These novel models allow us to effectively incorporate temporal information into GNN theory for modeling temporal graphs without loss of temporal edges, thereby enhancing expressiveness.
Moreover, by utilizing a single temporal graph as input, both full-batch and mini-batch training methods are independent of the number of snapshots, enabling more effective scalability for large dynamic graphs.

Extensive experiments conducted on eight widely used public datasets demonstrate that our approach outperforms state-of-the-art baselines significantly in link prediction tasks. The adoption of the mini-batch training approach substantially reduces the demand for GPU memory, achieving a maximum reduction of 44\% compared with full-batch training. Furthermore, ablation studies validate the effectiveness of the proposed Hawkes processes-based GNN in accurately modeling temporal graphs, resulting in a potential enhancement of up to 6 times compared to plain GAT. 

In summary, the main contributions are as follows:
\begin{itemize}
    \item We propose a novel idea of input snapshot fusion for discrete-time dynamic graphs, which merges multiple input snapshots into a single temporal graph structure. This approach facilitates a deeper understanding of the temporal evolution of dynamic graphs.
    \item We formulate the graph denoising problem using Hawkes processes theory under the assumption of time-decay smoothing in temporal graphs. This framework enables the design of efficient GNNs that incorporate a time-decay-based message-passing mechanism for dynamic graphs.
    \item We conduct extensive experiments to validate the effectiveness of our proposed model. The experiments demonstrate the superior performance of our approach in link prediction tasks on eight widely used public datasets. 
\end{itemize}

\section{Preliminaries}
In this section, we introduce notations for DTDGs and briefly summarize several important models. 

A discrete-time dynamic graph is defined as a series of snapshots $\{{G}^1, {G}^2, \cdots,{G}^T\}$, where $T$ is the total number of snapshots.
The snapshot at time $t$, i.e., ${G}^t = ({V}^t, {E}^t)$ is a graph with a node set ${V}^t$ and an edge set ${E}^t \subseteq {V}^t \times {V}^t$, where ${E}^{t}_{ij}$ represents an edge from node $i$ to node $j$ at snapshot $t$.
We use $\mathbf{A}^t$ to denote the binary adjacency matrix corresponding to the edge set ${E}^t$, and the neighbors of node $i$ in ${G^t}$, denoted as $N(i)$, is a set of edges, $\{ (i, j) \in {E}^t \}$. 
The superscript "t" can be omitted when it is not specified.

As shown in Figure~\ref{fig:dgnn_lp:b}, a temporal graph $\mathcal{G} = (\mathcal{V}, \mathcal{E}, \mathcal{T})$ was obtained by Input Snapshots Fusion which merges the snapshots.
Specifically, the node set $\mathcal{V}$ is equal to $V$, while the edge set $\mathcal{E}$ represents the union of edges across the snapshots, with $\mathcal{T}$ are the timestamps of the snapshot associated with $\mathcal{E}$.
It is important to note that, in practice, using the exact occurrence time of each edge as  $\mathcal{T}$, when available, rather than the snapshot indices, may improve performance. However, this distinction does not affect our conclusion.
To locate the temporal edges, the neighbors of a node $i$ in $\mathcal{G}$, denoted as ${\mathcal{N}}(i) = \{ (i, j,\tau) \in {(\mathcal{E}, \mathcal{T})} \}$, form a set of edges with timestamp that represent events involving nodes $i$ and $j$ at time $\tau$.

In this paper, we denote the output of the $l$-th layer output of the GNN as ${H}^{l} \in \mathbb{R}^{n \times d}$, where ${H}^{0} = {X}$ represents the input node features. We assume that the input node features remain fixed across different snapshots. Here $n=|\mathcal{V}|$ denotes the number of nodes, $d$ is the embedding dimension, and ${H}_i$ refers to the $i$-th row of ${H}$. In the following, we outline several key models.

\subsection{Link Prediction Tasks in DTDGs}

A discrete-time dynamic graph consists of a sequence of snapshots, which are divided into training, validation, and test sets according to their sequential indices. As shown in Figure~\ref{fig:dgnn_lp:a}, the model input is a sliding window of length $w$, where $w \ge 1$, a configurable hyper-parameter. The model employs spatial and temporal encoders to generate low-dimensional embeddings for each vertex. To predict edges in the subsequent snapshot, a multilayer perceptron (MLP) is used as a predictor, which takes the embeddings of two nodes as input and outputs the probability of a connection in the next time frame.

\subsection{Graph Convolutional Networks}
A single GCN layer~\cite{kipf2017semi} can be written as follows:
\[
\label{eq:gcn_overview}
{H}^{l+1} = \hat{A}{H}^l{W},
\]
where ${W}\in \mathbb{R}^{d \times d}$ is a feature transformation matrix, and $\hat{A}$ is a normalized adjacency matrix.
Additionally, from the perspective of message passing \cite{gilmer2017neural}, the update formula for node i can be expressed as
\begin{equation}
\label{eq:gcn_micro}
{H}_i^{(l+1)} = \sum_{j \in {N}(i) \cup \{ i \}} \frac{1}{\sqrt{\deg(i)} \cdot \sqrt{\deg(j)}} \cdot \left( {W}^{\top}{H}_j^{(l)} \right),
\end{equation}
where the features of neighboring nodes are first transformed by the weight matrix ${W}$, normalized by degree, and then summed.

\subsection{Graph Attention Networks}
Graph Attention Networks (GAT)~\cite{velikovi2017graph} adopts the same message passing mechanism as GCN. The feature aggregation operation for node $i$ is defined as: 
\begin{align}
    { H}_i^{l+1} = \sum\limits_{j\in {{N}}(i)}\alpha_{ij} { H}^l_j,  \quad \text{with} \quad \alpha_{ij}=\frac{\exp \left( e_{ij}\right)}{\sum\limits_{k \in {{N}}(i)} \exp \left(e_{ik}\right)}. \label{eq:gat_propagation}
\end{align}
In this aggregation operation, $\alpha_{ij}$ represents the attention score that differentiates the importance of distinct nodes within the neighborhood. Specifically, $\alpha_{ij}$ is the normalized form of $e_{ij}$, which is defined as:
\begin{align}
\label{eq:att_e_ij}
e_{ij}=\text { LeakyReLU }\left(\left[{ H}'_i \| {H}'_j \right] {\mathbf{a}}\right),
\end{align}
where $[\cdot \|\cdot ]$ denotes the concatenation operation and ${\bf a}\in \mathbb{R}^{2d}$ is a learnable vector.

\subsection{GNNs as Solving Graph Denoising Problem}
According to the conclusion in \cite{ma2021unified}, GNN models can be regarded as an approximation for solving a graph denoising problem under the assumption of smoothness. Formally, Given a noisy input signal ${\bf S}\in \mathbb{R}^{N\times d}$ on a graph ${G}$, the goals is to recover a clean signal ${\bf F}\in \mathbb{R}^{N\times d}$, assumed to be smooth over ${G}$, by solving the following optimization problem: 
\begin{equation}
    \label{eq:denoising}
    \mathop{\arg\min}\limits_{\mathbf{F}} \mathcal{L} = ||\mathbf{F} - \mathbf{S}||_{F}^{2} + \lambda \cdot tr(\mathbf{F}^{\top}\mathbf{L}\mathbf{F}).
\end{equation}
Where the first term guides ${\bf F}$ to be close to ${\bf S}$, while the second term $tr({\bf F}^{\top} {\bf L} {\bf F})$ is the Laplacian regularization which guides ${\bf F}$'s smoothness over ${G}$, with $\lambda>0$'s mediation.

\subsection{Hawkes Process}
The Hawkes process is a typical temporal point process \cite{Hawkes1971SpectraOS} that describes a sequence of discrete events by assuming that previous events have an impact on the current, with the influence diminishing over time. A univariate Hawkes process is defined to be a self-exciting temporal point process whose conditional intensity function $\lambda(t)$ is defined to be
\begin{equation}
  \label{equ:hawkes}
  \lambda(t)=\mu(t) + \sum\limits_{\tau_{i} < t'}\kappa(t'-\tau_i),
\end{equation}
where $\mu(t)$ represents the background intensity, $\tau_{i}$ are the points in time occurring prior to time $t'$; $\kappa$ is an exciting function that models the time decay effect of history on the current event, which is usually in the form of an exponential function, $\kappa(t-s) = exp(-\delta(t-s))$, where the $\delta$ is a non-negative source dependent parameter, representing the time sensitivity of the process.

With the advancement of graph representation learning, Hawkes processes have been incorporated into dynamic graph representation learning to model the influence of historical behaviors on current actions \cite{Zuo2018} and capture the dynamic nature of graph structures \cite{10.1145/3357384.3357943}. However, existing research primarily focuses on integrating Hawkes processes with GNNs within the local scope between node pairs in CTDG. In this paper, we will present how Hawkes processes are effectively integrated with GNNs from a global perspective within the adjacency matrix in the context of DTDG.

\section{The SFDyG Model}
\label{sec:method}
In this section, we present the proposed SFDyG framework. 
First, we examine the feasibility and advantages of input snapshot fusion for DTDGs. 
Next, we explain why the Hawkes processes can enhance graph neural networks (GNNs) in handling temporal edges, demonstrating that modeling temporal graphs can be approximately viewed as a graph denoising problem with time-decayed edge weights. 
Furthermore, we illustrate how existing mini-batch training methods can be seamlessly applied to Hawkes-GNNs for link prediction tasks, enabling more effective scaling to large datasets.

\subsection{Input Snapshots Fusion}
\label{subsec:isf}
As previously discussed, snapshots constitute the foundational structure of DTDGs. However, modeling snapshots separately is inefficient and introduces the risk of losing temporal information.
Drawing from these insights, as illustrated in Figure~\ref{fig:dgnn_lp:b}, we propose to fuse the input snapshots within the sliding window into a single temporal graph to improve the modeling of DTDG. Formally,
\begin{equation}
\label{equ:merge}
    {\mathcal{G}} = \text{Fusion}({G}^1,...,{G}^{w}) = ({V}, {E}^1\cup\cdots\cup{E}^{w}),
\end{equation}
where ${G}^1,...,{G}^{w}$ are the input snapshots in the sliding window and $\cup$ represents union of two edge sets. 

In the generated temporal graph, each temporal edge is associated with a time attribute, which may be an exact timestamp or a snapshot index, depending on the dataset. It is ensured that the sequence of events between any two nodes is ordered, and the temporal granularity across all events remains consistent.
In this way, it is possible to more effectively utilize time information with long dependencies, as well as apply state-of-the-art algorithms and acceleration methods from static graph fields.
However, the generated temporal graph $\mathcal{G}$ is beyond the capabilities of static GNNs, as multiple temporal edges can exist between two nodes.

\subsection{Hawkes Processes Based GNN}
Hawkes processes are powerful tools for modeling sequences of historical events, especially in scenarios where events recur, rendering them well-suited for modeling temporal edges.
However, directly applying Hawkes processes to the temporal edges would result in $O(n^2)$ possible pair of nodes, making this approach infeasible to scale to large datasets.

To address this challenge, we propose to integrate Hawkes processes (Equation~\ref{equ:hawkes}) with the message-passing framework (Equation~\ref{eq:gcn_micro}) to update node embeddings rather than focus on node pairs. 
Specifically, the base rate $\mu(t)$ and self-excitation $\sum\kappa(t'-\tau)$ are captured by the linear layer and the message aggregation layer, respectively.
Finally, to predict the probability of future links of node pairs, their embeddings are concatenated and processed through an MLP predictor.

Next, we connect Hawkes processes-based message passing neural networks with the graph denoising problem by introducing Hawkes excitation matrix $\mathcal{C}$ to gain a deeper understanding and derive formulations of Hawkes processes-based GNNs.

\begin{definition}[Hawkes excitation matrix]
a symmetric matrix $\mathcal{C}$ where each element $\mathcal{C}_{ij}$ represents the excitation effect of previously occurred events from node $i$ to node $j$,
\begin{equation}
    \mathcal{C}_{ij} = \sum_{(i,j,\tau) \in {\mathcal{E}}_{ij}}exp(-\delta(t' - \tau)), ~~ \text{with}~ \tau < t',
\end{equation}
\end{definition}
where $t'$ denotes the starting point of the time frame to be predicted, while the non-negative scalar $\delta$ represents the time sensitivity parameter. 
It is noteworthy that the Hawkes excitation matrix shares the same shape and zero elements as the adjacency matrix, but their non-zero elements differ. In the Hawkes excitation matrix, non-zero elements are expressions that quantify the excitation effect of temporal edges rather than indicating the presence of an edge. Thus, the Hawkes excitation matrix can be regarded as an extension of the adjacency matrix in the context of temporal graphs.

Based on these definitions, the temporal graph denoising problem can be formulated by employing $\mathcal{C}$ in Equation~\ref{eq:denoising}, as described in Theorem~\ref{theorem:hawkes-denoising}.
\begin{theorem}[Hawkes temporal graph denoising]
\label{theorem:hawkes-denoising}
When we adopt the Laplacian matrix ${{\bf L}} = {{\bf D}} - {\mathcal{C}}$, where ${\bf D} = diag(\sum_j{{\mathcal{C}}_{ij}})$, the graph denoising problem (Equaition~\ref{eq:denoising}) was extended to temporal domain with the assumption that the smoothness of edges decays over time.
\end{theorem}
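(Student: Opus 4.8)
The statement amounts to showing that substituting the Hawkes excitation matrix $\mathcal{C}$ for the binary adjacency in the Laplacian $\mathbf{L} = \mathbf{D} - \mathcal{C}$ turns the denoising objective of Equation~\ref{eq:denoising} into a well-posed problem whose regularizer is a pairwise smoothness penalty weighted by time-decayed event excitation. The plan is therefore to (i) confirm that the $\mathbf{L}$ built from $\mathcal{C}$ is a legitimate weighted graph Laplacian, so that Equation~\ref{eq:denoising} remains convex and the correspondence of \cite{ma2021unified} carries over; (ii) expand the regularization term $\mathrm{tr}(\mathbf{F}^{\top}\mathbf{L}\mathbf{F})$ into an explicit pairwise form; and (iii) substitute the definition of $\mathcal{C}_{ij}$ to exhibit the time-decay weighting.

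The central step is the Dirichlet-energy identity. Using the symmetry of $\mathcal{C}$ and the definition $\mathbf{D} = \mathrm{diag}(\sum_j \mathcal{C}_{ij})$, I would expand
\[
\mathrm{tr}(\mathbf{F}^{\top}\mathbf{L}\mathbf{F}) = \tfrac{1}{2}\sum_{i,j}\mathcal{C}_{ij}\,\|\mathbf{F}_i - \mathbf{F}_j\|_2^2 .
\]
Substituting $\mathcal{C}_{ij} = \sum_{(i,j,\tau)\in\mathcal{E}_{ij}} \exp(-\delta(t'-\tau))$ then shows that each pairwise difference $\|\mathbf{F}_i - \mathbf{F}_j\|_2^2$ is weighted by a cumulative excitation in which an event at time $\tau$ contributes $\exp(-\delta(t'-\tau))$, a factor that shrinks monotonically as the event ages. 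This makes precise the claim that the recovered signal $\mathbf{F}$ is encouraged to be smooth across node pairs in proportion to their recent interaction strength, i.e., the denoising smoothness assumption is extended to the temporal domain with time decay, recovering the binary-adjacency case of \cite{ma2021unified} when $\delta = 0$ and events collapse to a single snapshot.

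To close the loop with the message-passing view, I would verify positive semidefiniteness, which follows immediately from the identity above since every $\mathcal{C}_{ij}\ge 0$, guaranteeing a unique minimizer $\mathbf{F}^{\star} = (\mathbf{I} + \lambda\mathbf{L})^{-1}\mathbf{S}$, and then take a single gradient-descent step of $\mathcal{L}$ initialized at $\mathbf{F}=\mathbf{S}$, yielding $\mathbf{F} \approx (\mathbf{I} - 2\eta\lambda(\mathbf{D}-\mathcal{C}))\mathbf{S}$, whose aggregation term $\mathcal{C}\mathbf{S}$ is the Hawkes-weighted analogue of the neighbor aggregation in Equation~\ref{eq:gcn_micro}. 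The main obstacle is not the algebra, which is routine, but justifying the structural legitimacy of the substitution: one must check that $\mathcal{C}$, despite having non-binary, time-dependent entries, retains the symmetry and non-negativity a graph Laplacian requires, and that the degree matrix $\mathbf{D}$ is consistently formed from $\mathcal{C}$ rather than from the underlying binary adjacency, so that the row-sum and zero-pattern bookkeeping in the quadratic-form expansion goes through and the PSD property, hence convexity and the GNN-denoising correspondence, is preserved.
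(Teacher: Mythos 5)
Your proposal is correct and follows essentially the same route as the paper: expand $\mathrm{tr}(\mathbf{F}^{\top}\mathbf{L}\mathbf{F})$ into the weighted Dirichlet energy $\tfrac{1}{2}\sum_{i,j}\mathcal{C}_{ij}\|\mathbf{F}_i-\mathbf{F}_j\|_2^2$ (the paper carries out exactly this trace expansion in its appendix) and then substitute the definition of $\mathcal{C}_{ij}$ to expose the per-event factor $\exp(-\delta(t'-\tau))$. Your additional checks --- symmetry and non-negativity of $\mathcal{C}$, positive semidefiniteness of $\mathbf{L}$, and the one-step gradient-descent link to message passing --- go beyond what the paper writes down but do not change the argument.
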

\begin{proof}
\begin{equation}
\label{eq:decay_denoising}
\begin{split}
\lambda \cdot tr(\mathbf{F}^{\top}{\mathbf{L}}\mathbf{F}) &= \lambda \cdot \sum_{(i,j) \in {E}}\mathcal{C}_{ij}||\mathbf{F}_i - \mathbf{F}_j||_2^2  \\
&= \lambda \cdot \sum_{(i,j) \in {E}} \sum_{(i,j,\tau) \in {\mathcal{E}}_{ij}}exp(-\delta(t' - \tau))||\mathbf{F}_i - \mathbf{F}_j||_2^2 \\
&= \lambda \cdot \sum_{(i,j,\tau) \in {\mathcal{E}}} exp(-\delta(t' - \tau))||\mathbf{F}_i - \mathbf{F}_j||_2^2.
\end{split}
\end{equation}
Where each element of the summation is in the form of a time decay coefficient multiplied by a graph smoothing indicator. Which completes the proof.
\end{proof}
Theorem~\ref{theorem:hawkes-denoising} demonstrates that, in Hawkes processes-based temporal graph denoising problem, the influence of a temporal edge on its two endpoints depends on both the age of the edge and the time sensitivity parameter $\delta$. 
In other words, the Hawkes excitation matrix enables the graph denoising method to handle temporal graphs under the assumption that the smoothness constraints imposed by temporal edges diminish over time.

Then, Hawkes processes based GNNs can be derived by applying the findings from \cite{ma2021unified}, which demonstrate that the graph denoising problem can be approximately regarded as a GNN model.

\subsubsection{Hawkes-GCN}
By adopting a normalized Hawkes Laplacian matrix, defined as ${\mathbf{L}} = {D}^{-\frac{1}{2}}({D} - {\mathcal{C}}){D}^{-\frac{1}{2}}$ for Equation~\ref{eq:denoising}, where ${D} = \text{diag}(\sum_j{{{A}}_{ij}})$ is the diagonal degree matrix derived from the binary adjacency matrix, the temporal graph denoising problem is connected to the Hawkes-GCN model \cite{ma2021unified}. Formally,
\begin{equation}
  {H}^{k+1} = {D}^{-\frac{1}{2}}{\mathcal{C}}{D}^{-\frac{1}{2}}{ H}^{k}{ W},
\end{equation}
specifically, the $i$-th element in the node embedding ${H}$ can be expressed as follows,
\begin{equation}
\label{eq:hawkes_gcn}
\begin{split}
    {H}_i^{(k+1)} &= \sum_{j \in {N}(i)} \frac{\mathcal{C}_{ij}}{\sqrt{deg(i)deg(j)}} \cdot {W}^{\top}{H}_j^{(k)} \\
    &= \sum_{(i,j,\tau) \in {\mathcal{N}}(i)}\frac{exp(-\delta_i(t' - \tau))}{\sqrt{deg(i)deg(j)}}{W}^{\top}{H}_j^{(k)}.
\end{split}
\end{equation}
Where $\delta_i$ is a learnable parameter indicating that the time sensitivity parameter depends on the source node $i$. 
Through Equation \ref{eq:hawkes_gcn}, it can be observed that, during the message aggregation process in the temporal graph, each temporal edge transmits messages whose influence diminishes over time. Accordingly, we denote this Hawkes processes-based GNN as a time-decayed message-passing neural network.
In addition, we use the diagonal matrix derived from the binary adjacency matrix for regularization to incorporate the information of the unique number of neighbors for each node, which generally results in better performance. However, for dense temporal graphs, we recommend applying batch normalization \cite{ioffe2015batch} to mitigate potential numerical instabilities.

\subsubsection{Hawkes-GAT}
The success of GAT is to calculate the non-negative attention score $\alpha_{ij}$ to differentiate the importance of distinct nodes in the neighborhood, which is a natural choice of the time sensitivity coefficient $\delta$ in Equation~\ref{eq:hawkes_gcn}. Then, the GAT model is extended to the temporal domain, Formally,
\begin{equation}
\begin{split}
    &{H}_i^{(k+1)} = \sum_{(i,j,\tau) \in {\mathcal{N}}(i)}\frac{exp(-\delta_{ij}(t'-\tau))}{\sqrt{deg(i)deg(j)}}{W}^\top{H}_j^{(k)}, \\
    &\text{with}~~~\delta_{ij} = \frac{exp(e_{ij})}{\sum_{k \in {N}(i)}exp(e_{ik})}.
\end{split}
\end{equation}
Where $e_{ij}$ is described in Equation~\ref{eq:att_e_ij}. In this way, the Hawkes-GAT is derived under the assumption that the time sensitivity depends on both the source and target node of an edge, enabling more flexible modeling.

\subsection{Loss Function}
The link prediction task for the discrete-time dynamic graphs involves two steps \cite{egcn20, sankar2020dysat, You22}: first, generating all node embeddings $\mathit{H}$ using an encoder, denoted as $\textsc{Gnn}$. Second, predicting the target link probability based on the embeddings of its two endpoints, using an MLP referred to as $\textsc{Fc}$. The forward propagation process can be formalized as:
\begin{equation}
\label{eq:enc_dec}
    \mathit{H} = \textsc{Gnn}(G_{t-w},...,G_{t-1}),
    \hat{y}_{ij}= \sigma(\textsc{Fc}(\mathit{H}_i, \mathit{H}_j)).
\end{equation}

where ${H}_i$ represents the embedding of node $i$, and $\hat{y}_{ij}$ denotes the likelihood of a future connection from node $i$ to node $j$. The benefit of this two steps model is that node embeddings can be reused, which is especially suitable for training with multiple negative samples. Consistent with existing research, the cross-entropy loss is used as the loss function:
\begin{equation}
    \mathcal{L} = \frac{1}{|{E}|}\sum_{e_{ij} \in {E}}{(-y_{ij} \cdot log(\hat{y}_{ij}) - (1 - y_{ij}) \cdot log(1 - \hat{y}_{ij}))}.
\end{equation}
Where the term $E$ represents the collection of all positive edges union all randomly sampled negative edges.

\subsection{Minibatch Training Algorithm}
In this subsection, we demonstrate that the existing mini-batch algorithm can be efficiently applied to our proposed Hawkes processes based GNNs for link prediction tasks.

While training efficiency is improved by decoupling from window length through Input Snapshots Fusion, negative sampling plays a critical role in redundant computations. To further enhance efficiency, we adopt single negative sampling, as demonstrated in our experiments in section~\ref{sec:exp}, which confirms its sufficiency.
Additionally, the link neighbor loader in PyG \cite{Fey/Lenssen/2019} is employed for mini-batching temporal graphs, enabling a straightforward mini-batch algorithm, as detailed in Algorithm~\ref{algo:minibatch}.

\begin{algorithm}
	\caption{One Epoch of Mini-batch Training for SFDyG} \label{algo:minibatch}
	\SetAlgoLined
    \SetKwComment{Comment}{/* }{ */}
	\KwIn{Encoder $\textsc{Gnn}$, decoder $\textsc{Fc}$, window length $w$ and the dynamic graph $\{{G}^1, {G}^2, \cdots,{G}^T\}$ }
	\KwOut{Updated $\textsc{Gnn}$, \textsc{Fc}}
	
    \For{\texttt{index} i \texttt{from} w+1 \texttt{to} T}{
        $\mathcal{G} \leftarrow \texttt{Fusion}(G^{i-w},\cdots,G^{i-1}$) \;
        ${loader} \leftarrow \texttt{LinkNeighborLoader}(\mathcal{G}, G^i.Edge)$ \;
        \For{\texttt{batch} in $loader$}{ 
            $\texttt{h} \leftarrow \textsc{Gnn}(\texttt{batch})$ \;
            $\hat{\texttt{y}} \leftarrow \textsc{Fc}(\texttt{h}, \texttt{batch.edge\_label\_index})$ \;
            $\texttt{y} \leftarrow \texttt{batch.edge\_label} $ \;
            $\texttt{loss} \leftarrow \textit{cross-entropy}(\hat{\texttt{y}}, \texttt{y})$ \;
            $\texttt{loss}.\texttt{backward}()$ \;
            $\texttt{optimizer}.\texttt{step}()$ \;
        }
    }
\end{algorithm}
Compared to other models, mini-batching with Hawkes-GNN is more memory efficient as it stores the node embeddings $h$ only once. Additionally, it runs faster as there is no additional temporal encoder, which makes it more scalable to large datasets.
It is important to emphasize that Algorithm \ref{algo:minibatch} differs significantly from the mini-batch training approach used in CTDG \cite{tgn_icml_grl2020}. Specifically, it processes the dynamic graph from a whole graph perspective, encompassing nodes over a large time frame, rather than focusing on a subgraph within a small time interval. Moreover, the steps can be executed in parallel rather than sequentially in chronological order, making it more efficient with parallel computation.

\subsection{Complexity Analysis}
\begin{table}[]
\centering
\caption{Comparison of time and memory complexities.}
\label{tab:complexity}
\begin{tabular}{c|cc}
\hline
Method       & \multicolumn{1}{c}{Time} & \multicolumn{1}{c}{Memory} \\ \hline
DySAT        & $O(tlef + tlnf^2)$    & $O(tlnf + tlf^2)$  \\
EvolveGCN    & $O(tlef + tlnf^2)$    & $O(tlnf + tlf^2)$  \\
Roland       & $O(lef + lnf^2)$      & $O(lef + lf^2)$    \\
SFDyG-F      & $O(tlef + lnf^2)$     & $O(tlef + lf^2)$   \\ 
SFDyG-M      & $O(2ted^lf^2)$            & $O(2bd^lf + lf^2)$   \\ \hline
\end{tabular}
\vspace{-0.1in}
\end{table}

In this subsection, time and space complexity analyses were provided for SFDyG and the following representative DGNNs: DySAT \cite{sankar2020dysat}, EvolveGCN \cite{egcn20} and ROLAND \cite{You22}.

In the context of discrete-time dynamic graph link prediction, we consider several parameters to analyze the complexity of the algorithms. The snapshots are arranged in a slice window with length $t$, and the total number of nodes is represented as $n=|V|$. Moreover, we have the average number of edges per snapshot denoted as $e=|E|$, and the average degree per node as $d$. Without loss of generality, we assume that the node feature dimension and the length of the hidden vectors in the network are both $f$. Additionally, the number of layers in the graph neural network is represented as $l$, and the batch size for mini-batch training is denoted as $b$. The time and memory complexities are summarized in Table~\ref{tab:complexity}, while detailed analyses are provided in Appendix~\ref{sec:complexity}.

Overall, the full-batch version denoted as SFDyG-F exhibits the second-best time and space complexity, only surpassed by Roland. In comparison to Roland, SFDyG demonstrates enhanced capability in capturing long-range temporal dependencies, without being constrained by sequential training in chronological order. Typically we have $l <= 3$ for GNN and large graphs tend to exhibit sparsity, therefore, the mini-batch version denoted as SFDyG-M showcases superior space complexity, making it suitable for application in large-scale dynamic graphs with numerous nodes.

\section{Experiments}
\label{sec:exp}
\begin{table*}
  \caption{Overall performance (MRR@100) comparison on eight datasets (\% is omitted). Our experiments guarantee consistent data settings and standardized methods for computing Mean Reciprocal Ranks (MRRs) to facilitate fair comparisons. Each experiment is conducted using three random seeds, and the average performance is reported along with the standard error.}
  \label{tab:main_mrr}
  \begin{tabular}{cccccccccl}
    \toprule
    Methods & OTC & Alpha & UCI & Title & Body & AS733 & SBM & SO \\
    \midrule
\texttt{DySAT}   &21.39 $\pm$ 2.79 & 19.16 $\pm$ 2.21 & 23.31 $\pm$ 9.42 & 17.46 $\pm$ 4.18 & 13.87 $\pm$ 3.90 & 25.10 $\pm$ 1.71 & 6.88 $\pm$ 0.53 & OOM.    \\
\texttt{EvolveGCN}   &7.84 $\pm$ 0.09 & 6.65 $\pm$ 0.55 & 7.33 $\pm$ 0.15 & 30.67 $\pm$ 0.00 & 18.55 $\pm$ 0.02 & 42.06 $\pm$ 0.00 & 21.38 $\pm$ 0.00 & 31.21 $\pm$ 0.48    \\
\texttt{Roland}   &30.94 $\pm$ 0.70 & 32.97 $\pm$ 1.78 & 17.04 $\pm$ 2.30 & 46.33 $\pm$ 0.27 & \underline{38.57 $\pm$ 0.42} & 21.21 $\pm$ 5.73 & 1.96 $\pm$ 0.00 & 38.57 $\pm$ 1.44    \\
\texttt{WinGNN}   &3.86 $\pm$ 1.26 & 3.90 $\pm$ 0.84 & 2.37 $\pm$ 0.13 & 4.19 $\pm$ 1.25 & 2.69 $\pm$ 0.38 & 4.29 $\pm$ 2.10 & 3.35 $\pm$ 0.50 & 7.51 $\pm$ 0.67    \\
\texttt{VGRNN}   &6.62 $\pm$ 0.10 & 6.49 $\pm$ 0.29 & 6.96 $\pm$ 0.08 & OOM. & 17.19 $\pm$ 0.14 & 41.94 $\pm$ 2.04 & 19.79 $\pm$ 0.23 & OOM.    \\
\texttt{HTGN}   &6.36 $\pm$ 0.06 & 7.72 $\pm$ 0.66 & 8.67 $\pm$ 0.43 & 11.50 $\pm$ 0.98 & 10.70 $\pm$ 0.52 & 13.86 $\pm$ 0.58 & 10.92 $\pm$ 1.19 & OOM.    \\
\texttt{GraphMixer}   &43.67 $\pm$ 0.25 & 35.72 $\pm$ 0.41 & 33.63 $\pm$ 0.02 & 38.32 $\pm$ 0.01 & 33.15 $\pm$ 0.02 & 28.86 $\pm$ 0.00 & 1.96 $\pm$ 0.00 & OOM.    \\
\texttt{M2DNE}   &7.82 $\pm$ 1.05 & 5.49 $\pm$ 0.29 & 8.86 $\pm$ 0.44 & 5.40 $\pm$ 0.05 & 6.03 $\pm$ 0.38 & 19.43 $\pm$ 0.12 & OOM. & OOM.    \\
\texttt{GHP}   &3.40 $\pm$ 0.41 & 3.40 $\pm$ 0.46 & 4.15 $\pm$ 0.14 & 16.00 $\pm$ 2.32 & 8.33 $\pm$ 2.00 & 22.15 $\pm$ 4.88 & 26.85 $\pm$ 17.65 & OOM.    \\ \midrule
\texttt{Hawkes-GCN}   &\underline{46.16 $\pm$ 0.45} & \textbf{47.87 $\pm$ 5.85} & \textbf{35.61 $\pm$ 0.06} & \underline{47.44 $\pm$ 0.20} & 36.44 $\pm$ 0.42 & \underline{44.34 $\pm$ 0.41} & \textbf{29.10 $\pm$ 0.73} & \underline{46.41 $\pm$ 0.31}    \\
\texttt{Hawkes-GAT}   &\textbf{51.34 $\pm$ 0.07} & \underline{40.66 $\pm$ 0.25} & \underline{35.59 $\pm$ 1.58} & \textbf{50.84 $\pm$ 0.05} & \textbf{40.97 $\pm$ 0.47} & \textbf{45.95 $\pm$ 0.79} & \underline{28.96 $\pm$ 0.70} & \textbf{48.83 $\pm$ 0.14}    \\
    \bottomrule
  \end{tabular}
\end{table*}
\subsection{Experimental Setup}
\subsubsection{Datasets}
We conducted experiments on eight commonly used public datasets that have been extensively evaluated in previous studies on dynamic graph representation learning, encompassing Bitcoin-Alpha, Bitcoin-OTC, UCI, Reddit-Title, Reddit-Body, AS733, and Stack Overflow. The fundamental statistics of the eight datasets are presented in Table~\ref{tab:dataset}. Following EvolveGCN \cite{egcn20}, we subdivided the original dataset into multiple snapshots of equal frequency. Subsequently, the training, validation, and test sets are divided along the time dimension. Details of the datasets can be found in Appendix~\ref{sec:ap_ds_dsec}.

\begin{table}[]
\caption{Summary of dataset statistics.}
\label{tab:dataset}
\begin{tabular}{ccccc}
\toprule
& \# Nodes & \# Edges & \# Time Steps & Avg.  \\
&          &          & (Train/Val/Test) & Degree \\
\midrule
UCI      & 1,899        & 59,835     & 35 / 5 / 10 & 0.36\\
Alpha & 3,777        & 24,173     & 95 / 13 / 28 & 0.04\\
OTC   & 5,881        & 35,588     & 95 / 14 / 28 & 0.05\\
Title & 54,075   & 571,927    & 122 / 35 / 17 & 0.06\\
Body  & 35,776   & 286,562    & 122 / 35 / 17 & 0.05\\
AS733       & 7,716        & 1,167,892  & 70 / 10 / 20 & 2.12\\ 
SBM      & 1,000        & 4,870,863  & 35 / 5 / 10 & 97.42\\
SO       & 2,601,997    & 63,497,050 & 65 / 9 / 18 & 0.12\\ 
\bottomrule
\end{tabular}
\end{table}

\subsubsection{Baselines}
We evaluated the performance of our proposed model, Hawkes-GCN, Hawkes-GAT by comparing to several dynamic GNN baselines, namely EvolveGCN \cite{egcn20}, DySat \cite{sankar2020dysat}, GHP \cite{Shang_Sun_2019}, ROLAND \cite{You22}, GraphMixer (G-Mixer for short) \cite{congwe}, M2DNE \cite{10.1145/3357384.3357943}, VGRNN \cite{hajiramezanali2019variational}, HTGN \cite{yang2021discrete} and WinGNN \cite{Zhu23}. 
Note that some baseline models were originally designed for modeling the dynamics of CTDG. To demonstrate the superiority of Hawkes-GNN, we reimplemented these models and adapted them to the DTDG setting.
In Appendix~\ref{sec:ap_baseline}, a comprehensive description of these baselines can be found.

\subsubsection{Evaluation metrics}
We evaluate the effectiveness of the SFDyG framework in the context of future link prediction. Our primary evaluation metric is the Mean Reciprocal Rank (MRR) with 100 negative sampling as defined in OGB \cite{hu2020open}, which is an average of the pessimistic and optimistic ranks. An analysis of different MRRs based on various statistical approaches is presented in Appendix~\ref{sec:ap_metric}. 

\subsubsection{SFDyG Architecture}
SFDyG adopts the prevalent encoder-decoder architecture for future link prediction, featuring a two-layer Hawkes processes based GNN as the encoder to generate embeddings for all nodes. The model utilizes a two-layer MLP as the decoder, taking a pair of nodes as input and determining the probability of their forthcoming connection. To maintain parity in evaluations, all models in the experiment share identical decoders architecture, differing only in their encoders.

\subsection{Main Results}

\subsubsection{Full-batch Training}
The performance of the proposed SFDyG and other baseline models in dynamic link prediction is presented in Table~\ref{tab:main_mrr}. The results reveal significant variations in the effectiveness of existing baseline models across different dynamic graph datasets. 
DySAT and EvolveGCN, equipped with temporal encoders, demonstrate better performance on denser graphs like AS733 and SBM, suggesting possible underutilization of temporal encoders. 
Conversely, models with single snapshot inputs, such as VGRNN and HTGN, exhibit similar performances, indicating potential neglect of temporal features. 
GraphMixer and Roland emerge as the top among the baselines, however, they fail to learn on dense datasets like SBM due to limited history neighbors and completed temporal patterns.
Hawkes processes based methods like M2DNE and GHP behave better on dense datasets, but they are likely to OOM and behave poorly on sparse datasets like OTC and Alpha.
In contrast, our proposed model SFDyG showcases substantial advantages over all datasets, outperforming baseline models by a considerable margin, highlighting the efficacy of our GNN based on Hawkes processes in capturing temporal information in temporal graphs. Generally, Hawkes-GAT demonstrates slightly superior performance compared to Hawkes-GCN, except for datasets with minimal edges like bitcoin-alpha and nodes like SBM, thereby affirming the more adaptable modeling capability of Hawkes-GAT. 
Collectively considering these factors, we assert that SFDyG surpasses existing baseline methods in dynamic link prediction capabilities.

Figure~\ref{fig:gpu} illustrates the utilization of GPU memory by the SFDyG model throughout the training procedure as compared to various standard baseline methods. The findings demonstrate a notable superiority of our approach over other multi-snapshot input baselines, some of which face out-of-memory (OOM) issues when applied to the StackOverflow dataset. GPU memory consumption of our proposed methods closely resembles that of the single snapshot input method Roland, exhibiting a lower constant space complexity, albeit performing less effectively than Roland when applied to the SBM dataset with an edge degree of about 100. The higher efficiency in memory usage of our method suggests promising scalability potential.

\begin{figure}[]
  \centering
  \includegraphics[width=\linewidth]{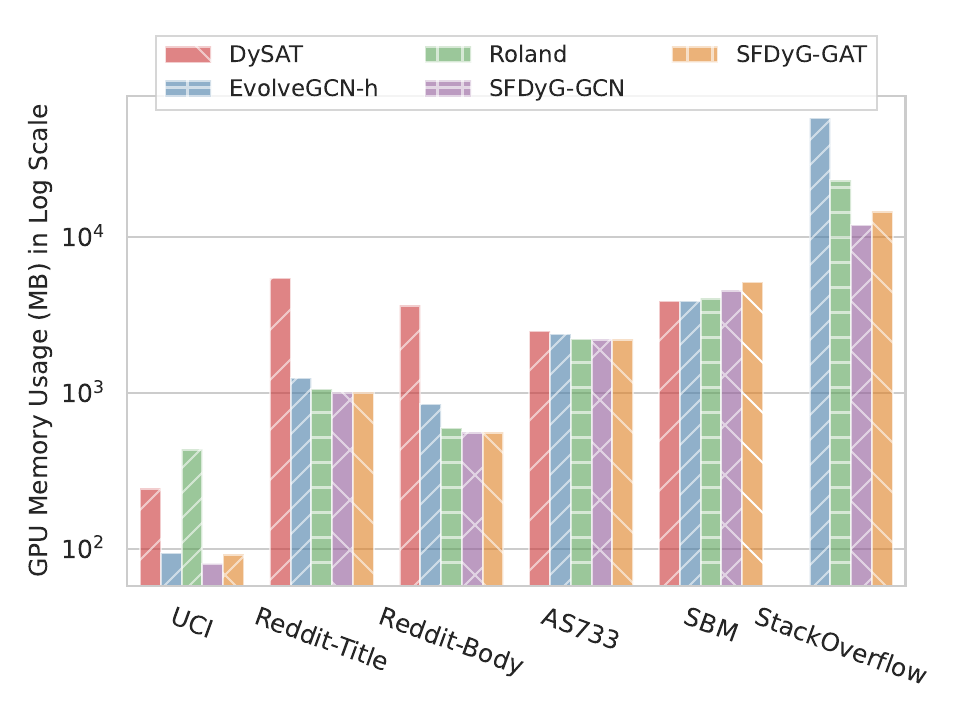}
  \caption{The GPU memory usage of SFDyG and representative baselines on six datasets.}
  \Description{The GPU memory consumption of our method is similar to that of Roland's single snapshot input method, and in most cases, it performs better.}
  \label{fig:gpu}
\end{figure}

\subsubsection{Mini-batch Training vs. Full batch Training}
\begin{table}[]
  \caption{The performance of full-batch training and mini-batch training was compared on eight datasets in terms of the relative percentage change in MRR@100 ($\Delta$ MRR@100) and GPU memory usage ($\Delta$ GPU).}
  \label{tab:minibatch}
  \begin{tabular}{c|cccc}
    \toprule
    {Dataset} & Full-batch &  Mini-batch & $\Delta$Mrr@100 & $\Delta$GPU \\
    \midrule
OTC  & 51.34 $\pm$ 0.07  &  52.59 $\pm$ 1.49  & $\uparrow$ 2.43\%  & $\downarrow$ 4.13\%  \\
Alpha  & 40.66 $\pm$ 0.25  &  40.61 $\pm$ 0.19  & $\downarrow$ 0.12\%  & $\downarrow$ 13.01\%  \\
UCI  & 35.59 $\pm$ 1.58  &  39.00 $\pm$ 0.11  & $\uparrow$ 9.58\%  & $\downarrow$ 0.67\%  \\
Title  & 50.84 $\pm$ 0.05  &  51.17 $\pm$ 0.17  & $\uparrow$ 0.65\%  & $\downarrow$ 6.59\%  \\
Body  & 40.97 $\pm$ 0.47  &  41.87 $\pm$ 0.18  & $\uparrow$ 2.20\%  & $\downarrow$ 7.78\%  \\
AS733  & 45.95 $\pm$ 0.79  &  52.59 $\pm$ 1.18  & $\uparrow$ 14.45\%  & $\downarrow$ 2.03\%  \\
SO  & 48.83 $\pm$ 0.14  &  47.84 $\pm$ 0.05  & $\downarrow$ 2.03\%  & $\downarrow$ 44.01\%  \\
SBM  & 28.96 $\pm$ 0.70  &  29.45 $\pm$ 0.54  & $\uparrow$ 1.69\%  & $\uparrow$ 0.19\%  \\
    \bottomrule
  \end{tabular}
\end{table}

We train our proposed Hawkes-GAT using the mini-batch training algorithm on eight datasets to compare the memory consumption and performance between full-batch and mini-batch training. The batch size is determined according to the size of the dataset, and all neighbors are selected by the neighbor sampler.

The comparative results of full-batch and mini-batch training are presented in Table \ref{tab:minibatch}.  In general, there is a noticeable decrease in GPU consumption in the majority of cases, particularly with large datasets such as StackOverflow, showing a reduction of more than 44\%. However, for smaller and denser datasets like SBM, there is a slight increase of 0.19\% in GPU consumption, suggesting that mini-batch training approaches may not be ideal in these particular situations.
Moreover, the experimental results exhibit minimal variance in most datasets except on UCI and AS733 increased by 9.58\% and 14.45\% respectively. This phenomenon may be attributed to the capacity of small-batch training to mitigate the influence of supernodes.

\subsubsection{Mini-batch Training vs. Mini-batch Training}

\begin{figure}[]
  \centering
  \includegraphics[width=\linewidth]{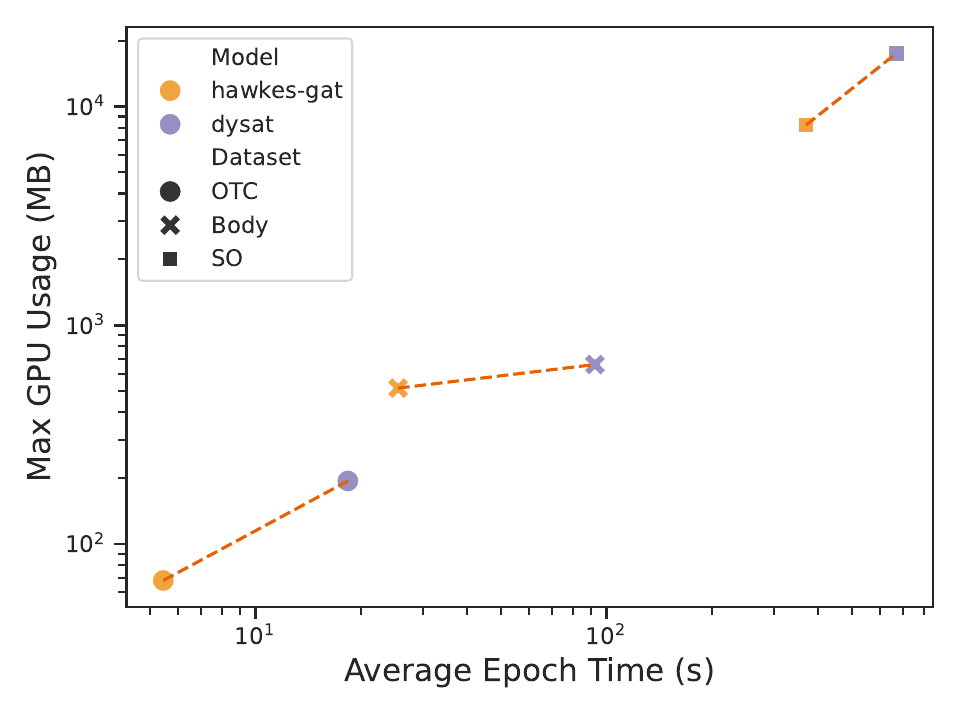}
  \caption{The comparison of average epoch time and maximum GPU memory usage during mini-batch training. The dotted line represents one dataset, while the endpoints illustrate the execution metrics on that dataset.}
  \Description{}
  \label{fig:mini}
\end{figure}

In the existing literature, few studies have explored the application of mini-batch training to DTDGs. A key challenge is aligning selected nodes and their neighbors across snapshots. Unexpectedly, we find that Input Snapshots Fusion helps address this issue. Specifically, for each snapshot, an additional attribute is added to every edge to record the snapshot's index. The mini-batch method is then applied to the fused temporal graph to generate temporal subgraphs, ensuring that all historical neighbors of the selected nodes are sampled. Finally, based on the snapshot index stored in the edge attributes, the temporal subgraph is divided back into multiple sub-snapshots. This approach enables traditional multi-snapshot models to leverage mini-batch training, thereby scaling effectively to large datasets.

Based on the above analysis, we evaluated mini-batch training on DySAT and Hawkes-GAT by measuring the average single-epoch training time and maximum GPU memory usage across datasets of varying sizes, as shown in Figure \ref{fig:mini}. The three datasets are UCI, Body, and SO, representing small, medium, and large datasets, respectively. The results indicate that, under the same batch size, Hawkes-GAT consistently achieves better scalability with faster training speed and lower memory usage.

\subsection{Ablation Study}

In modern graph neural network libraries such as PyG \cite{Fey/Lenssen/2019}, the data structures for static graphs and temporal graphs are identical. This implies that plain graph neural network algorithms, like GAT, can be utilized on temporal graphs formed through input snapshots fusion methods. Hence, the question arises: Is it necessary to develop dedicated algorithms for temporal graphs? To answer this question, we run experiments between the plain GAT and Hawkes-GAT.

\begin{table}
  \caption{The overall performance comparison (MRR@100) between the plain GAT and the Hawkes-GAT, with the relative percentage of improvements.}
  \label{tab:ablation}
  \begin{tabular}{cccc}
    \toprule
    DataSet & GAT & Hawkes-GAT & Improve \\
    \midrule
OTC &  15.18 $\pm$ 7.63  &  51.34 $\pm$ 0.07  & 238.21\%  \\
Alpha &  9.68 $\pm$ 3.17  &  40.66 $\pm$ 0.25  & 320.04\%  \\
UCI &  15.98 $\pm$ 4.06  &  35.59 $\pm$ 1.58  & 122.72\%  \\
Title &  17.87 $\pm$ 2.40  &  50.84 $\pm$ 0.05  & 184.50\%  \\
Body &  12.33 $\pm$ 2.22  &  40.97 $\pm$ 0.47  & 232.28\%  \\
AS733 &  22.62 $\pm$ 1.04  &  45.95 $\pm$ 0.79  & 103.14\%  \\
SBM &  3.78 $\pm$ 1.87  &  28.96 $\pm$ 0.70  & 666.14\%  \\
SO &  22.39 $\pm$ 2.48  &  48.83 $\pm$ 0.14  & 118.09\%  \\
    \bottomrule
  \end{tabular}
\end{table}

As presented in Table~\ref{tab:ablation}, the findings reveal that the Hawkes processes-based GAT model significantly enhances the overall performance, surpassing the standard GAT by a considerable margin across all datasets. Particularly, on the SBM dataset, the Mrr@100 metric exhibited a notable increase, skyrocketing from 3.78\% to 28.96\%. This notable improvement underscores the efficacy and versatility of our proposed methodology.

\subsection{Hyper-parameter Sensitivity Analysis}

\begin{figure}[ht]
  \centering
  \includegraphics[width=\linewidth]{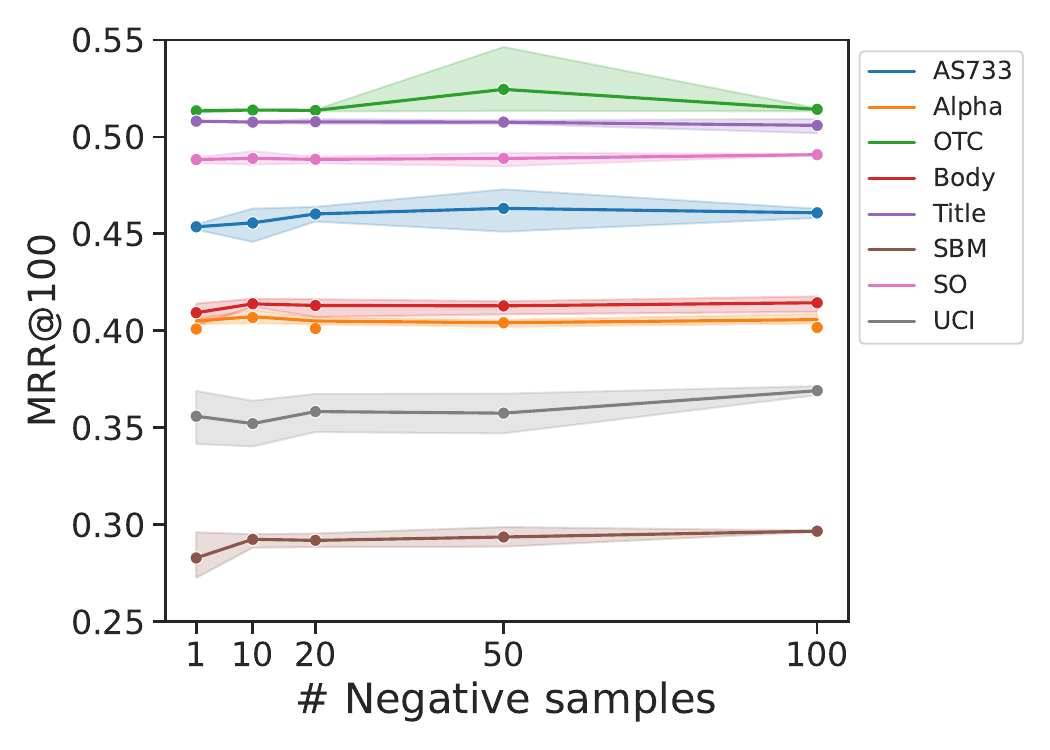}
  \caption{Performance of Hawkes-GAT with varying numbers of negative samples during training. }
  \Description{}
  \label{fig:negative_all}
\end{figure}

\subsubsection{Negative sampling} helps train the model to distinguish between positive and negative pairs, which is widely used in link prediction tasks. Let $k$ denote the number of negative samples for training. Previous works tend to have $k$ larger than one. While $k$ is essential for the efficiency of the mini-batch algorithm, we study the model performance with varying numbers of negative samples. 
As shown in Figure~\ref{fig:negative_all}, the average model performance across the eight datasets does not exhibit significant changes as $k$ increases. Take the UCI dataset as an example (Figure~\ref{fig:negative_uci}), an analysis using ANOVA \cite{girden1992anova} on the five experiment groups resulted in a p-value of $0.45969$, which fails to reject the null hypothesis, suggesting no substantial variance in the means across these experimental sets. Consequently, we have the conclusion that the average performance of the proposed method is insensitive to $k$ for our proposed method.
\begin{figure}[ht]
  \centering
  \includegraphics[width=\linewidth]{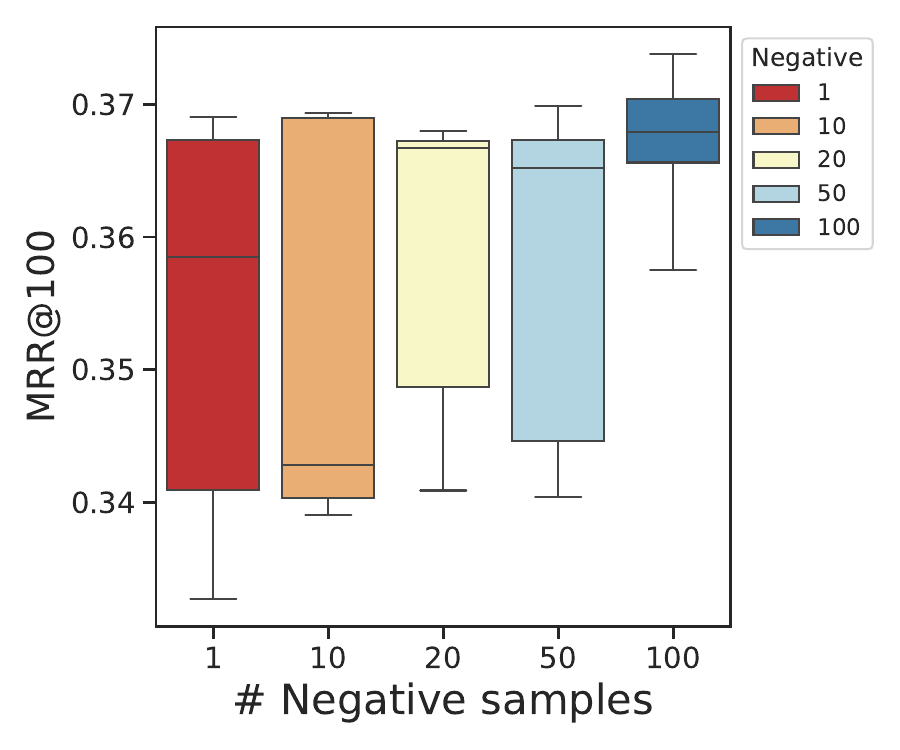}
  \caption{The distribution of MRR@100 on the UCI dataset with different negative samples for training.}
  \Description{}
  \label{fig:negative_uci}
\end{figure}

\begin{figure}
  \centering
  \includegraphics[width=\linewidth]{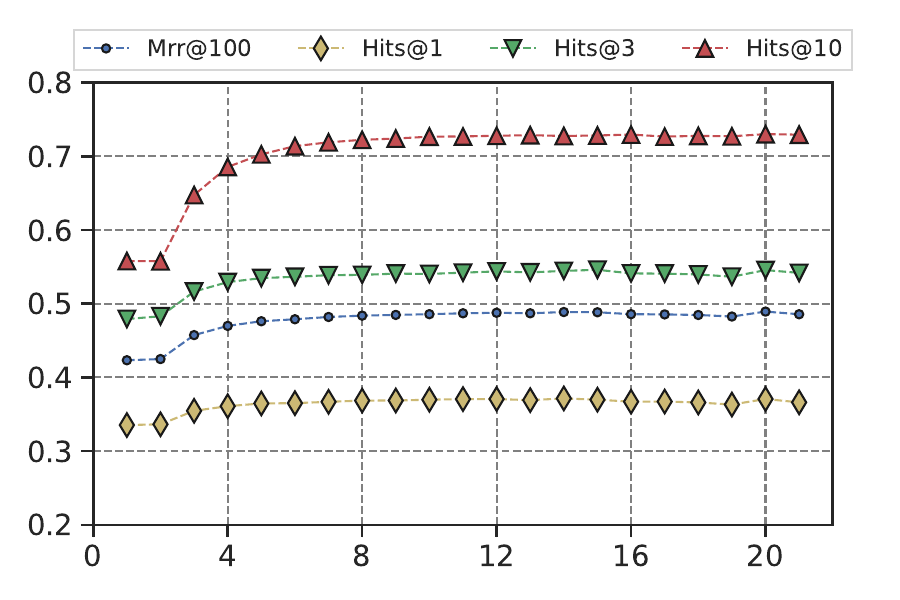}
  \caption{Performance of Hawkes-GAT on Reddit-Title with various sliding window sizes.}
  \Description{}
  \label{fig:hyper}
\end{figure}

\subsubsection{Sliding window size} serves as another key hyperparameter. To investigate the model's sensitivity to the window size, various window sizes were used in training the Reddit-Title dataset to analyze the performance variations, as illustrated in Figure~\ref{fig:hyper}. The results demonstrate that a window size of 4 represents a critical point in the model's performance, indicating the presence of long-distance temporal dependencies within the data. Insufficient input snapshots lead to the omission of essential information. Furthermore, depicted in the figure, beyond a certain window size, the impact of the number of input windows on future predictions notably diminishes. This observation aligns with the Hawkes process assumption used in this study, suggesting that the influence of past events on future predictions gradually wanes over time. Taken together, these findings underscore the robustness of our proposed Hawkes-GNNs.

\section{Related Works}

\subsection{Dynamic Graph Neural Networks}
Dynamic graph representation learning aims to learn the temporal low-dimensional representations of nodes as the graph evolves over time, mainly categorized into continuous-time dynamic graph (CTDG) and discrete-time dynamic graph (DTDG) methods based on the form of dynamic graphs. CTDG based methods treat dynamic graphs as event streams with accurate timestamps, generating dynamic node embeddings by iteratively processing information gathered from temporal neighbors \cite{trivedi2019dyrep, kumar2019predicting}. Despite the success of continuous-time methods \cite{tgat_iclr20, tgn_icml_grl2020, wang2021apan}, in practical applications, many datasets lack precise timestamp information due to historical reasons, making continuous-time methods inapplicable. 
In addition, while our work adopts the concept of temporal graphs from CTDG, it differs in its approach under the DTDG framework by modeling and predicting the entire graph as a whole. This distinction sets our research apart from prior studies.

DTDG represents the dynamic graph through a sequence of snapshots. It leverages multiple previous snapshots to predict the events in subsequent snapshots. The intuitive idea is to combine GNN with a time encoder such as RNN \cite{hochreiter1997long, chung2014empirical} or Transformer \cite{vaswani2017attention}, which, however, has high time complexity and space complexity. Recent studies have highlighted that the incorporation of extra time encoders can lead to overfitting \cite{Zhu23}. Alternatively, a single snapshot as input has been explored by employing latent random variables \cite{hajiramezanali2019variational}, hyperbolic space \cite{yang2021discrete} or meta-learning \cite{You22}. These approaches enable the system to process a single snapshot as input, offering a promising solution for scalability. 
By sequentially training the snapshots, existing static graph scaling techniques can be utilized. However, it should be noted that the training time scales linearly with the total number of snapshots. Moreover, disregarding long-distance time dependencies may lead to overfitting issues. What sets our research apart from existing methodologies is that we fuse the multiple input snapshots, thereby obviating the necessity for additional temporal encoders.

\subsection{Scalable Graph Neural Networks}
GNNs are typically executed in a full-batch manner, which makes it challenging to scale to large graphs in practice for limited GPU memory and training time. To facilitate training on large-scale datasets, mini-batch training methods were proposed for static GNNs such as graph sampling \cite{hamilton2017inductive, chen2018fastgcn, chiang2019cluster}, graph sparsification \cite{rong2019dropedge, li2020sgcn}, and graph partitioning \cite{chiang2019cluster, md2021distgnn} . 
The basic idea is training the network on only a subset of the entire graph data at a time, rather than using the entire graph. The same issue also arises in DTDGs with the additional challenge that there are multiple snapshots. Few studies have explored how to align selected nodes across snapshots for mini-batch training on DTDGs. In contrast, our proposed Input Snapshots Fusion facilitates mini-batch training on both multiple snapshots and temporal graph approaches, enhancing the scalability of large datasets.

\subsection{Hawkes Processes Based Graph Learning}
Hawkes processes are powerful temporal point processes widely used in modeling event sequences \cite{hawkes1971spectra}, and have been extensively studied to adapt to different scenarios. Prior studies \cite{Shang_Sun_2019, zuo2018embedding, 10.1145/3357384.3357943, 10.1145/3485447.3512164, 10.1007/978-3-030-86486-6_24} often adhere to the formulation of the Equation~\ref{equ:hawkes} to predict the probability of future links, where the neighbor influence component duplicates message aggregation mechanism in GNNs. In contrast, we propose a deeper integration of Hawkes processes with GNNs through time-decayed message passing which eliminates the need for additional parameters from Hawkes processes, 
setting our approach apart from previous works.

\section{Conclusion}
This study proposes a novel approach to enhance the scalability of discrete-time dynamic graph models by combining multiple snapshots within the input sliding window into a single temporal graph. This method effectively decouples computational complexity from the number of snapshots, enabling the use of mini-batch training methods. 
To model the generated large temporal graph, we employ Hawkes excitation matrix to represent the temporal edges, which provides modeling of the temporal graph as denoising with time decay smoothing assumption. 
Building on this, we propose Hawkes processes-based GNNs, which capture graph dynamics effectively while being more resource-efficient than previous methods.
Extensive experiments demonstrate the scalability, robustness, and versatility of our framework.  
This research focuses solely on the link prediction task in discrete-time dynamic graphs due to data constraints. Future directions involve exploring scalable methods such as graph partition and expanding our framework to diverse dynamic graph representation learning tasks including node classification and link classification.

\begin{acks}
Q. Qi and H. Chen were supported in part by the National Key Research and Development Program of China(2023YFB4502400) and the National Natural Science Foundation of China under Grant 62271452.
\end{acks}

\bibliographystyle{ACM-Reference-Format}
\balance
\bibliography{kdd}

\appendix

\section{Research Methods}

\subsection{Details of Equation~\ref{eq:decay_denoising}}
In this subsection, we provide details of Equation~\ref{eq:decay_denoising}, proving $tr(\mathbf{F}^T\mathbf{L}\mathbf{F}) = \sum_{(i,j) \in \mathcal{E}}\mathcal{C}_{ij}||\mathbf{F}_i - \mathbf{F}_j||_2^2$, while $\mathbf{L} = diag(\sum_j{{\mathcal{C}}_{ij}}) - {\mathcal{C}}$. For the sake of convenience in symbolic representation, let us assume that $\mathbf{F} \in \mathbb{R}^{n \times 1}$ with elements denoted as $f_1, f_2,\cdots,f_n$, and the process of proving in the high-dimensional case follows the same steps. By using the rule of matrix trace calculation, the left term can be changed into
\[
    \label{eq:trace}
    \begin{split}
        tr(\mathbf{F}^{\top}\mathbf{L}\mathbf{F}) &= tr(\mathbf{F}\mathbf{F}^{\top}\mathbf{L}) \\
        &= tr(\mathbf{F}\mathbf{F}^{\top}diag(\sum_j{{\mathcal{C}}_{ij}})) - tr(\mathbf{F}\mathbf{F}^{\top}\mathcal{C}).
    \end{split}
\]
Where the elements of $\mathbf{F}\mathbf{F}^{\top}$ is
\[
\begin{bmatrix}
    f_1f_1 & f_1f_2 & \cdots  & f_1f_n \\
    f_2f_1 & f_2f_2 & \cdots  & f_2f_n \\
    \vdots & \vdots & \ddots & \vdots \\
    f_nf_1 & f_nf_2 & \cdots  & f_nf_n
\end{bmatrix}.
\]
Since $diag(\sum_j{{\mathcal{C}}_{ij}})$ is a diagonal matrix 
\[
  \begin{bmatrix}
    \sum_j{\mathcal{C}_{1j}} & & \\
    & \ddots & \\
    & & \sum_j{\mathcal{C}_{nj}}
  \end{bmatrix}.
\]
Therefore we have 
\[
  tr(\mathbf{F}\mathbf{F}^{\top}diag(\sum_j{{\mathcal{C}}_{ij}})) = \sum_i\sum_j\mathcal{C}_{ij}f_i^2.
\]
Since the trace operation only sums elements in diagonal, we have 
\[
  tr(\mathbf{F}\mathbf{F}^{\top}{\mathcal{C}}) = \sum_i\sum_j\mathcal{C}_{ij}f_if_j.
\]
Based on the above results, equation $tr(\mathbf{F}\mathbf{F}^{\top}diag(\sum_j{{\mathcal{C}}_{ij}})) - tr(\mathbf{F}\mathbf{F}^{\top}\mathcal{C})$ can be transformed into
\[
  \begin{split}
        &= \sum_i\sum_j\mathcal{C}_{ij}f_i^2 - \sum_i\sum_j\mathcal{C}_{ij}f_if_j \\
        &= \frac{1}{2}(\sum_i\sum_j\mathcal{C}_{ij}f_i^2 - 2\sum_i\sum_j\mathcal{C}_{ij}f_if_j + \sum_i\sum_j\mathcal{C}_{ij}f_j^2) \\
        &= \frac{1}{2}\sum_i\sum_j\mathcal{C}_{ij}(f_i - f_j)^2 \\
        &= \sum_{(i,j) \in \mathcal{E}}\mathcal{C}_{ij}||\mathbf{F}_i - \mathbf{F}_j||_2^2,
  \end{split}
\]
which completes the proof.

\subsection{Complexity Analysis}
\label{sec:complexity}
In this subsection, we provide the detailed descriptions for time and space analysis in Table~\ref{tab:complexity}.

\textbf{Time complexity.} According to the results presented ClusterGCN \cite{chiang2019cluster}, the time complexity of message-passing based GNN is given by $O(lef + lnf^2)$. The DySAT involves obtaining node embeddings for $t$ snapshots within a given time window, which requires a time complexity of $O(tlef + tlnf^2)$. Following this, the self-attention is utilized to encode temporal information for $n$ nodes in $t$ snapshots. The time complexity of the self-attention mechanism is $O(nt^2f)$, while the feature transformation complexity is $O(ntf^2)$. Consequently, the time complexity of DySAT can be expressed as $O(tlef + tlnf^2 + nt^2f)$. Considering usually $lf > t$, then it can be written as $O(tlef + tlnf^2)$. In the case of EvolveGCN, RNN is used to update the parameters at each GNN step in the time window, and RNN performs feature transformation for all nodes in each snapshot, resulting in a time complexity of $O(ntf^2)$. Thus, the total time complexity of EvolveGCN is $O(tlef + tlnf^2)$. In the Roland algorithm, the time window $t$ is set to 1, and no time encoder is utilized, leading to a complexity of $O(lef + lnf^2)$ for a single window. SFDyG merges $t$ snapshots in the time window into a temporal graph, with the number of edges being $te$. Therefore, for the full-batch training of SFDyG, denoted as SFDyG-F, the time complexity is $O(tlef + lnf^2)$. For the mini-batch training version of SFDyG denoted as SFDyG-M, each edge requires feature aggregation from $O(2d^l)$ neighbors, resulting in a time complexity of $O(2ed^lf^2)$.

\textbf{Memory complexity.} Similarly, Based on the conclusions from the ClusterGCN \cite{chiang2019cluster}, the message passing based GNN has a space complexity of $O(lef + lf^2)$. Consequently, the space complexity of the GNN component in DySAT is $O(tlnf + tlf^2)$. The self-attention mechanism requires storing the attention matrix with $O(t^2))$, outputs $(O(tf))$, and feature transformation parameters $O(f^2)$. Therefore, the space complexity of the self-attention component becomes $O(nt^2 + ntf + f^2)$. Typically, with $lf > t$, leading to an overall space complexity of $O(tlnf + tlf^2)$ for DySAT. The RNN part of EvolveGCN needs to store all intermediate results as parameters for GNN, resulting in a space complexity of $O(tnfl + F^2)$. Consequently, the combined space complexity becomes $O(tlnf + tlf^2)$. As Roland has a window size of 1, its space complexity is the same as that of a single GNN, which is $O(lef + lf^2)$. In the case of the full-batch SFDyG-F, the input edge number is $te$, resulting in a space complexity of $O(tlef + lf^2)$. For the mini-batch trained SFDyG-F, a mini-batch can have $bd^L$ edges, resulting in a space complexity of $O(2bd^lf + lf^2)$.

\section{Experiment Details}
\subsection{Description of Datasets}
\label{sec:ap_ds_dsec}
In our experiments, we utilize a combination of synthetic and publicly available benchmark.

\textbf{Stochastic Block Model.}
\footnote{\url{https://github.com/IBM/EvolveGCN}}
(SBM for short)
SBM is a widely employed random graph model utilized for simulating community structures and evolutions. The data utilized in this study was obtained from the GitHub repository of EvolveGCN \cite{egcn20}.

\textbf{Bitcoin OTC.}%
\footnote{\url{http://snap.stanford.edu/data/soc-sign-bitcoin-otc.html}}
(OTC for short)
OTC is a who-trusts-whom network among bitcoin users who engage in trading activities on the platform \url{http://www.bitcoin-otc.com}. The data set may be used for predicting the polarity of each rating and forecasting whether a user will rate another one in the next time step.

\textbf{Bitcoin Alpha.}%
\footnote{\url{http://snap.stanford.edu/data/soc-sign-bitcoin-alpha.html}}
(Alpha for short)
Alpha is created in the same manner as OTC, except that the users and ratings come from a different trading platform, \url{http://www.btc-alpha.com}.

\textbf{UC Irvine messages.}%
\footnote{\url{http://konect.uni-koblenz.de/networks/opsahl-ucsocial}}
(UCI for short)
UCI is an online community of students from the University of California, Irvine, wherein the links of this social network indicate sent messages between users. Link prediction is a standard task for this data set.

\textbf{Autonomous systems.}%
\footnote{\url{http://snap.stanford.edu/data/as-733.html}}
(AS for short)
Autonomous Systems (AS) constitute a communication network of routers that exchange traffic flows with their peers. This dataset can be utilized for predicting future message exchanges.

\textbf{Reddit-Title and Reddit-Body.}%
\footnote{\url{https://snap.stanford.edu/data/soc-RedditHyperlinks.html}}
The network of hyperlinks between subreddits is derived from hyperlinks found in posts. These hyperlinks can appear in either post titles or bodies, resulting in two distinct datasets.

\textbf{Stack Overflow.}%
\footnote{\url{https://snap.stanford.edu/data/sx-stackoverflow.html}}
(SO for short)
The dataset presents interactions within the Stack Overflow platform. Nodes in the dataset represent users, while directed edges indicate the flow of answer activity between users.

\subsection{Description of Baselines}
\label{sec:ap_baseline}

\textbf{DySAT} \cite{sankar2020dysat} computes node representations through joint self-attention along the two dimensions of the structural neighborhood and temporal dynamics.

\textbf{EvolveGCN} \cite{egcn20}: adapts the GCN to compute node representations, and captures the dynamism of the graph sequence by using an RNN to evolve the GCN parameters.

\textbf{ROLAND} \cite{You22}: views the node representations at different GNN layers as hierarchical node states and recurrently updates them.
We present results of the moving average variant of Roland, which can be trained in our GPU environment for the StackOverflow dataset.

\textbf{VGRNN} \cite{hajiramezanali2019variational}: a hierarchical variational model that introduces additional latent random variables to jointly model the hidden states of a graph recurrent neural network (GRNN).

\textbf{HTGN} \cite{yang2021discrete} maps the dynamic graph into hyperbolic space, and incorporates hyperbolic graph neural network and hyperbolic gated recurrent neural network to obtain representations.

\textbf{WinGNN} \cite{Zhu23}: is a GNN model that employs a meta-learning strategy and introduces a novel random gradient aggregation mechanism. 

\textbf{GraphMixer} \cite{congwe}: simplifies temporal graph learning by utilizing MLP-based link and node encoders, achieving high performance in link prediction tasks through faster convergence and improved generalization.

\textbf{M2DNE} \cite{10.1145/3357384.3357943}: is a novel approach for temporal network embedding by effectively capturing both micro- and macro-dynamics through a temporal attention point process and a general dynamics equation parameterized with network embeddings.

\textbf{GHP} \cite{Shang_Sun_2019}: integrates Hawkes processes with a graph convolutional recurrent neural network to efficiently model correlated temporal sequences with improved prediction accuracy.

\subsection{Evaluation Metrics}
\label{sec:ap_metric}
The MRR score is the average of the reciprocal ranks of the positive samples within the negative samples, formally, 
\[
MRR=\frac{1}{N}\sum_{i=1}^{N}{\frac{1}{rank(p_i)}}.
\]

\subsection{Experiment Setup}
\label{sec:ap_setup}
\subsubsection{Running Environment} We perform our comparisons on an Ubuntu 20.04.4 LTS server with an Intel Xeon 32-Core Processor, 200 GB RAM, and an NVIDIA A100-SXM4-80GB Tensor Core GPU. SFDyG is implemented with Python 3.11.4 with PyTorch 2.3.1 and torch\_geometric 2.4.0 framework.

\subsubsection{Hyper-parameter Settings} In all experiments, unless otherwise specified, we ensured consistency by utilizing a standardized set of hyperparameters, with dummy node features (a vector of all ones) to avoid over-fitting, one negative sample for training, and 100 negative samples for testing. Negative samples for testing were generated in advance to maintain reproducibility. The training window size was set to 10, comprising 9 input snapshots and one target snapshot. The default dropout rate was 0.1, the learning rate was 0.001, the hidden layer size was 64 and the patience of early stopping was 20 epochs without MRR increase on the validation dataset. For the mini-batch experiments, $LinkNeighborLoader$ in PyG \cite{Fey/Lenssen/2019} was used to construct the mini-batches.
Additionally, the Adam \cite{kingma2014adam} was used as the optimizer for gradient descent and the Cosine Annealing learning rate scheduler \cite{loshchilov2016sgdr} was used to accelerate training.

\subsubsection{Source Code} The source code is available at \url{https://github.com/oncemoe/hawkesGNN} 

\end{document}